\documentclass[pdflatex,sn-mathphys-num]{sn-jnl}

\usepackage{graphicx}
\usepackage{multirow}
\usepackage{amsmath,amssymb,amsfonts}
\usepackage{amsthm}
\usepackage{mathrsfs}
\usepackage[title]{appendix}
\usepackage{xcolor}
\usepackage{textcomp}
\usepackage{manyfoot}
\usepackage{booktabs}
\usepackage{algorithm}
\usepackage{algorithmicx}
\usepackage{algpseudocode}
\usepackage{listings}

\theoremstyle{thmstyleone}
\newtheorem{theorem}{Theorem}
\newtheorem{proposition}{Proposition}
\theoremstyle{thmstyletwo}
\newtheorem{example}{Example}
\newtheorem{remark}{Remark}

\theoremstyle{thmstylethree}

\begin{document}

\title[Identifiability of $k$-DPPs]{%
From DPPs to $k$-DPPs: identifiability analysis\\ via spectral decomposition}

\author{\fnm{Hideitsu} \sur{Hino}}\email{hino@ism.ac.jp}
\author{\fnm{Keisuke} \sur{Yano}}\email{yano@ism.ac.jp}

\affil{%
  \orgname{The Institute of Statistical Mathematics},
  \orgaddress{%
    \street{10-3 Midori cho},
    \city{Tachikawa City},
    \postcode{190-8562},
    \state{Tokyo},
    \country{Japan}}}

\abstract{%
We study the geometry of determinantal point processes (DPPs) through
the spectral decomposition $L=U\Lambda U^{\top}$.  The spectrum
$\Lambda$ governs the cardinality distribution via elementary symmetric
polynomials, while the eigenspace orientation $U$ governs the
conditional law within each fixed-cardinality stratum.  Conditioning
on cardinality $k$ yields the $k$-DPP, for which the identifiability
structure changes fundamentally: the spectral parameter becomes
identifiable only up to a common scale, and the eigenspace rotation parameter is
identifiable only through squared minors of the eigenvector matrix. 
We characterize the identifiability gap precisely, via three explicit
invariances (scale, sign similarity, and eigenspace rotation) and a
dimension-counting theorem showing the existence of additional continuous non-identifiability whenever $\binom{N}{k}<N(N+1)/2$.  In contrast, for the full
DPP the non-identifiability comes only from the discrete sign similarity.}

\keywords{Determinantal Point Process, $k$-DPP, Spectral Decomposition,
Identifiability, Fisher Information}

\maketitle

% =====================================================================
\section{Introduction}
% =====================================================================

Determinantal point processes (DPPs~\cite{KuleszaandTasker_2012})
provide a probabilistic framework for modeling repulsive interactions
and diversity.  Despite their algorithmic tractability, the parameter
space of DPPs possesses a highly nontrivial geometric structure.
Unlike flat exponential families~\cite{Amari1982-ts}, admissible
kernels are constrained by positivity, spectral bounds, and nonlinear
relations among principal minors. 
Understanding how these constraints
are organized geometrically is essential for interpreting
parametrization, inference, and structural restrictions of the model.

From the perspective of algebraic geometry,
the number of critical points of the DPP log-likelihood and the
algebraic equations governing its parameters are
elucidated~\cite{Friedman2024-wb}.  The asymptotic properties of
maximum likelihood estimators via an likelihood-geometric approach
are analyzed in~\cite{Brunel2017-iq}. 
Information-geometric quantities related to DPP has been also analyzed in~\cite{Hino2024-xm}.
These existing geometric analyses
have primarily relied on diagonal scaling, which clarify how DPPs are embedded into  larger exponential families~\cite{Hino2024-xm} but leave cardinality and
correlation inherently coupled.

The spectral decomposition $L=U\Lambda U^{\top}$ suggests an
alternative coordinate system in which two statistically distinct
components appear: the eigenvalues $\Lambda$ and the eigenspace
orientation $U$.  A key question is how conditioning on cardinality
$|A|=k$, which passes from the full DPP to the $k$-DPP~\cite{Kulesza2011-qo}, changes the identifiability of each component.

The contributions of this letter are the following.
\begin{enumerate}
\item \emph{Parameterization using the spectral decomposition.}
We characterize $k$-DPPs using the parametrization from the spectral decomposition. When the eigenspace orientation is fixed to the identity matrix, it becomes an exponential family.
In this case, the degeneracy of the Fisher information matrix is analyzed. The Fisher information matrix degenerates exactly along the
common-shift direction $\mathbb{R}\mathbf{1}_{N}:=\{c\mathbf{1}_{N}\;:\;c\in\mathbb{R}\}$ with $\mathbf{1}_{N}$ the all-one vector of size $N$, and at symmetric
points reduces to the canonical metric on the additive quotient space
$\mathbb{R}^N/\mathbb{R}\mathbf{1}_{N}$.

\item \emph{Identifiability of $k$-DPPs.}
We characterize the identifiability gap between the full DPP and the
$k$-DPP.  Three explicit invariances are established (scale, sign
similarity, eigenspace rotation), and a dimension-counting theorem
shows the existence of additional continuous non-identifiability whenever
$\binom{N}{k}<N(N+1)/2$.
\end{enumerate}

% =====================================================================
\section{Spectral decomposition of DPPs}
% =====================================================================

Let $\mathcal{Y}:=\{1,\ldots,N\}$.
Let $\mathrm{Sym}^+(N,\mathbb{R})$ be the set of real-valued, symmetric, positive semidefinite $N\times N$ matrices.
A DPP on $\mathcal{Y}$ assigns to each subset
$A\subseteq \mathcal{Y}$ the probability
\[
P_L(A)=\frac{\det(L_A)}{\det(L+I_{N})},
\]
where $L\in\mathrm{Sym}^+(N,\mathbb{R})$ is the kernel matrix and $I_{N}$ is the $N\times N$ identity matrix.  DPPs
form a curved exponential family embedded in the space of log-linear
models~\cite{Hino2024-xm}.

The sample space $2^\mathcal{Y}$ is partitioned into cardinality strata
\[
2^\mathcal{Y}=\bigsqcup_{k=0}^{N}\mathcal{X}_k,
\qquad
\mathcal{X}_k:=\{A\subseteq \mathcal{Y}:|A|=k\},
\]
and the probability of $A$ factorizes as
$P_L(A)=P(|A|=k)\,P(A\mid A \in \mathcal{X}_{k})$,
where $P(A\mid A \in \mathcal{X}_{k})$ is the $k$-DPP \cite{Kulesza2011-qo}.

Let $L=U\Lambda U^{\top}$ with $\Lambda=\mathrm{diag}(\lambda_1,\ldots,\lambda_N)$ and $U\in O(N)$
be the spectral decomposition,
where $O(N)$ is the set of $N\times N$ orthogonal matrices.
The spectrum $\Lambda$ determines the
cardinality distribution~\cite{Kulesza2011-qo}:
\begin{equation}
P(|A|=k)
=\frac{e_k(\lambda_1,\ldots,\lambda_N)}{\prod_{i=1}^{N}(1+\lambda_i)},
\label{eq:Pk}
\end{equation}
where $e_k$ is the $k$-th elementary symmetric polynomial, $e_k(\lambda_1,\dots,\lambda_N)
:= \sum_{1\le i_1<\cdots<i_k\le N}
\lambda_{i_1}\cdots \lambda_{i_k}$. The
eigenspace orientation $U\in O(N)$, together with $\Lambda$, determines the within-stratum law:
\begin{equation}
P(A\mid A\in \mathcal{X}_{k})
\propto\det\bigl((U\Lambda U^{\top})_A\bigr).
\label{eq:kDPP}
\end{equation}
Thus the spectral decomposition separates cardinality control from
within-cardinality-stratum correlation structure.

In what follow, we write $P^k_L$ for the probability measure of the $k$-DPP with kernel $L$, i.e. $P^k_L(A) = \det(L_A)/Z_k(L)
$ for $A\in\mathcal{X}_{k}$, where $Z_k(L):=\sum_{A\in \mathcal{X}_{k}}\det(L_A)$.
Let $\mathcal{L}_{N,k}:=\{L\in\mathrm{Sym}^+(N): Z_k(L)>0\}$.

We first express the $k$-DPP in the parameterization using the spectral decomposition, which provides a connection to the exponential family. Recall that an exponential family is said to be non-minimal if its sufficient statistics satisfy a nontrivial affine relation on the sample space, 
while an exponential family is said to be minimal if there is no such relation.

\begin{proposition}[The expression of $k$-DPPs using the spectral decomposition]
\label{prop:1}
Fix $k\in\{1,\ldots,N-1\}$ and $U\in O(N)$.  For
$\theta\in\mathbb{R}^N$, let
$L(\theta)=U\,\mathrm{diag}(e^{\theta_1},\ldots,e^{\theta_N})U^{\top}$.
Then the induced $k$-DPP on $\mathcal{X}_k$ is
\begin{equation}
P^{k}_{L(\theta)}(A)
=\displaystyle\sum_{B\in\mathcal{X}_{k}}\det(U_{A,B})^2\exp\!\Bigg{(}\sum_{i\in B}\theta_i\Bigg{)}
      \Big{/}e_k(e^{\theta_1},\ldots,e^{\theta_N}),
\quad A\in \mathcal{X}_k,
\label{eq:kDPP-explicit}
\end{equation}
where $U_{A,B}$ denotes the $k\times k$ submatrix of $U$ with rows in
$A$ and columns in $B$.  When $U=I_{N}$, it forms a non-minimal exponential family
\[
P^{k}_{L(\theta)}(A)
=\exp\!\Bigg{(}\sum_{i=1}^{N}T_{i}(A)\theta_i-\psi_k(\theta)\Bigg{)},
\;
\psi_k(\theta):=\log e_k(e^{\theta_1},\ldots,e^{\theta_N}), \;
\theta\in\mathbb{R}^{N},
\]
where
\[T(A)=(T_{1}(A),\ldots,T_{N}(A))=(\mathbf{1}_{1\in A},\ldots,\mathbf{1}_{N\in A})\] is a non-minimal sufficient statistic. 
Further, using the identifiable parameterization $\tilde{\theta}=(\theta_{1}-\theta_{N},\ldots,\theta_{N-1}-\theta_{N})^{\top}\in\mathbb{R}^{N-1}$ yields a minimal exponential family with natural parameter $\tilde{\theta}$.
\end{proposition}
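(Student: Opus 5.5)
The plan is to apply the Cauchy--Binet formula to the principal minor $\det(L_A)$ with $L=U D U^{\top}$ and $D=\mathrm{diag}(e^{\theta_1},\ldots,e^{\theta_N})$. Writing $L_A = U_{A,\cdot}\,D\,(U_{A,\cdot})^{\top}$, where $U_{A,\cdot}$ is the $k\times N$ block of rows in $A$, Cauchy--Binet gives
\[
\det(L_A)=\sum_{B\in\mathcal{X}_k}\det(U_{A,B})\det(D_B)\det(U_{A,B})=\sum_{B\in\mathcal{X}_k}\det(U_{A,B})^2\exp\Bigl(\sum_{i\in B}\theta_i\Bigr).
\]
This is the numerator in \eqref{eq:kDPP-explicit}.

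For the normalizer $Z_k(L)=\sum_{A}\det(L_A)$, I will swap the two sums. It then suffices to show that $\sum_{A\in\mathcal{X}_k}\det(U_{A,B})^2=1$ for every $B$. This is again Cauchy--Binet, applied to $(U_{\cdot,B})^{\top}U_{\cdot,B}=I_k$, which holds because the columns of $U$ are orthonormal. Hence $Z_k=e_k(e^{\theta_1},\ldots,e^{\theta_N})>0$, which is also the standard identity that $Z_k(L)$ is the $k$-th elementary symmetric polynomial of the spectrum. So $L(\theta)\in\mathcal{L}_{N,k}$ and \eqref{eq:kDPP-explicit} follows.

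For $U=I_N$, the minor $\det(U_{A,B})$ equals $\pm1$ if $B=A$ and $0$ otherwise. Therefore $P^k(A)=\exp(\sum_{i\in A}\theta_i)/e_k(e^\theta)$, which is exactly $\exp(\langle T(A),\theta\rangle-\psi_k(\theta))$. To show non-minimality, note that on $\mathcal{X}_k$ the statistic satisfies $\sum_i T_i(A)=k$, a nontrivial affine relation. Consistently, shifting $\theta\mapsto\theta+c\mathbf{1}_N$ multiplies both the numerator and $e_k$ by $e^{kc}$, so the distribution does not change.

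For minimality, I will substitute $T_N(A)=k-\sum_{i<N}T_i(A)$. This gives $\langle T,\theta\rangle=\sum_{i<N}T_i(\theta_i-\theta_N)+k\theta_N$, and the term $k\theta_N$ is absorbed into the log-normalizer, which becomes $\log e_k(e^{\tilde\theta_1},\ldots,e^{\tilde\theta_{N-1}},1)$. It remains to show that the reduced statistic $(T_1,\ldots,T_{N-1})$ satisfies no affine relation on $\mathcal{X}_k$. Suppose $\sum_{i<N}a_iT_i(A)=c$ for all $A\in\mathcal{X}_k$. Since $1\le k\le N-1$, I can compare a set $A$ containing $i$ but not $j$ with the set $A'=A\setminus\{i\}\cup\{j\}$ (here $i,j<N$), which yields $a_i=a_j$. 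Similarly, comparing $A\ni i$ with $N\notin A$ against the set obtained by swapping $i$ for $N$ yields $a_i=0$. Hence all $a_i=0$ and then $c=0$. The main care is needed in this swap argument: it must use $1\le k\le N-1$ so that such sets $A$ exist. The rest is routine.
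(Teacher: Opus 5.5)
Your proposal is correct and follows essentially the same route as the paper: Cauchy--Binet for $\det(L(\theta)_A)$, the specialization $\det(U_{A,B})^2=\mathbf{1}_{A=B}$ when $U=I_N$, the relation $\sum_i T_i(A)=k$ for non-minimality, and swapping $j$ with $N$ to force $a_j=0$ for minimality. Beyond the paper, you also verify $Z_k=e_k(e^{\theta_1},\ldots,e^{\theta_N})$ via $\sum_{A}\det(U_{A,B})^2=1$, which the paper leaves implicit; your comparison of two indices $i,j<N$ is harmless but redundant, since the swap with $N$ alone already gives $a_i=0$.
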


\begin{proof}
The formula \eqref{eq:kDPP-explicit} follows by applying the
Cauchy--Binet formula to $\det(L(\theta)_A)$.  The exponential-family form for $U=I_{N}$
follows by setting $\det(U_{A,B})^2=\mathbf{1}_{A=B}$.
Considering $\sum_{i=1}^{N}T_{i}(A)=k$ shows the non-minimality under $\theta$. 
Minimality under $\tilde{\theta}$ follows because any affine relation
$\sum_{i=1}^{N-1}a_iT_i(A)=c\; (A\in\mathcal X_k)$
must be trivial: for each \(j\le N-1\), choose
$R\subseteq\mathcal Y\setminus\{j,N\}$ with $|R|=k-1$, and compare
$R\cup\{j\}$ with $R\cup\{N\}$.  This gives 
$a_j=0$, hence all $a_{j}$ should be zero, which completes the proof.
\end{proof}

\subsection{The Fisher information matrix}
\label{subsec:fisher-N3k2}
% =====================================================================

Proposition~\ref{prop:1} shows that in the diagonal eigenspace rotation case $U=I_{N}$ the
$k$-DPP forms an exponential family,
where $\theta$ is non-identifiable parameterization.
We now make this
picture quantitative 
using the Fisher information matrix
in the smallest nontrivial case $N=3$, $k=2$.
Write $s_i:=e^{\theta_i}$ and
$e_2:=e_2(s_1,s_2,s_3)=s_1s_2+s_1s_3+s_2s_3$.
Proposition~\ref{prop:1} gives
$P^{(2)}_\theta(\{i,j\})=s_is_j/e_2$,
and the Fisher information matrix with respect to $\theta$ is
$G(\theta)=\nabla^2_{\theta}\psi_2(\theta)$ with
$\psi_2(\theta)=\log e_2(e^{\theta_1},e^{\theta_2},e^{\theta_3})$.

Consider the mean parameter.
Differentiating $\psi_2$ once gives the inclusion probabilities
\begin{equation}
\eta_i(\theta):=\partial_i\psi_2(\theta)
=\mathbb{E}_\theta[\mathbf{1}_{i\in A}]
=\frac{s_i(s_j+s_k)}{e_2},
\qquad\{i,j,k\}=\{1,2,3\}.
\label{eq:eta}
\end{equation}
From the cardinality constraint $|A|=2$, we have $\eta_1+\eta_2+\eta_3=2$, which indicates the non-identifiability in $\eta$.

We next consider Fisher information matrix.
Writing $\pi_{ij}:=P^{(2)}_\theta(\{i,j\})=s_is_j/e_2$, a direct
calculation yields
\begin{equation*}
G_{ii}=\eta_i(1-\eta_i),\qquad G_{ij}=\pi_{ij}-\eta_i\eta_j\;(i\ne j),
\end{equation*}
or, equivalently,
\begin{equation}
G(\theta)=\mathrm{diag}(\eta)-\eta\eta^{\top}+\Pi,
\qquad
\Pi=\begin{pmatrix}0&\pi_{12}&\pi_{13}\\\pi_{12}&0&\pi_{23}\\\pi_{13}&\pi_{23}&0\end{pmatrix}.
\label{eq:fisher-matrix}
\end{equation}
This Fisher information matrix with respect to $\theta$ is degenerate because $\theta$ is non-identifiable.
In fact, differentiating $\eta_1+\eta_2+\eta_3=2$ in $\theta_i$ gives
$G(\theta)\,\mathbf{1}_{N=3}=\mathbf{0}_{N=3}$ with $\mathbf{0}_{N=3}$ all-zero vector of size $N$.
Moreover,
$G(\theta)$ has rank exactly $2$: the restriction to
$\mathbf{1}^{\perp}_{N=3}:=\{u:u_1+u_2+u_3=0\}$ is positive definite,
since
$u^{\top}G(\theta)u=\mathrm{Var}_\theta(\sum_i u_i\mathbf{1}_{i\in A})>0$
for $u\ne0$ in $\mathbf{1}^{\perp}_{N=3}$.

Consider the Fisher information matrix at a symmetric point: $s_{1}=s_{2}=s_{3}$.
As an example, consider $\theta=0$ (i.e.\ $s_i=1$). 
Then, we have $\eta_i=\tfrac{2}{3}$ and $\pi_{ij}=\tfrac{1}{3}$, and \eqref{eq:fisher-matrix} reduces to
\begin{equation}
G(0)=\frac{1}{3}\begin{pmatrix}2&-1&-1\\-1&2&-1\\-1&-1&2\end{pmatrix}
=I_{N=3}-\tfrac{1}{3}\mathbf{1}_{N=3}\mathbf{1}^{\top}_{N=3}.
\label{eq:G0}
\end{equation}
Thus $G(0)$ is the orthogonal projector onto $\mathbf{1}^{\perp}_{N=3}$,
with eigenvalues $\{0,1,1\}$.
So, the Fisher metric at the symmetric
point is the canonical metric on the additive quotient vector space $\mathbb{R}^3/\mathbb{R}\mathbf{1}_{3}$.

% =====================================================================
\section{Identifiability issue}
\label{sec:identifiability}
% =====================================================================

We now move to the identifiability issue. 
Proposition \ref{prop:1} exhibits the non-identifiability of $k$-DPPs even when the eigenspace orientation is identity.
The full DPP
is determined by its kernel $L$ up to sign similarity $L\mapsto DLD$
with $D\in T_\pm$~\cite[Theorem~4.1]{Kulesza2012-rt},
where $T_\pm$ is the sign-flip
group $T_\pm:=\{\mathrm{diag}(\varepsilon_1,\ldots,\varepsilon_N):\varepsilon_i\in\{\pm1\}\}$.
Conditioning on the cardinality collapses this picture significantly.
The following theorem shows there are several non-identifibility in the $k$-DPP.

\begin{theorem}[Explicit invariances via spectral transformation]
\label{thm:invariances}
Fix $1 \le k \le N-1$ and let $L \in \mathcal{L}_{N,k}$ with its spectral decomposition $L = U \Lambda U^\top$. For a scale factor $c > 0$, a sign-flip matrix $D \in T_{\pm}$, and an orthogonal matrix $Q \in O(N)$ satisfying $Q \Lambda Q^\top = \Lambda$, let $M$ be the kernel matrix reconstructed via the transformed spectral components:
\begin{equation}
M = (D U Q) (c \Lambda) (D U Q)^\top.
\label{eq:M_transform}
\end{equation}
Then, it holds that $P_{M}^k = P_{L}^k$. This transformation simultaneously characterizes the following three invariances:
\begin{enumerate}
\item \emph{(Scale.)} $\Lambda \mapsto c \Lambda$ for $c > 0$.
\item \emph{(Sign similarity.)} $U \mapsto D U$ for $D \in T_{\pm}$.
\item \emph{(Eigenspace rotation.)} $U \mapsto U Q$ for $Q \in O(N)$ such that $Q \Lambda = \Lambda Q$.
\end{enumerate}
\end{theorem}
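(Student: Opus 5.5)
The plan is to simplify $M$ algebraically first and then check the three effects on the unnormalized weights $\det(M_A)$, $A\in\mathcal{X}_k$, one at a time. The key point is that $P^k_M$ depends on $M$ only through the ratios $\det(M_A)/Z_k(M)$. It therefore suffices to show $\det(M_A)=c^k\det(L_A)$ for every $A\in\mathcal{X}_k$. Then $Z_k(M)=c^kZ_k(L)>0$, so $M\in\mathcal{L}_{N,k}$ and the normalized laws coincide.

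\textbf{Step 1 (rotation).} Expanding gives
\[
M = D U Q (c\Lambda) Q^\top U^\top D^\top = c\, D U (Q\Lambda Q^\top) U^\top D = c\, D L D,
\]
using $D^\top=D$ and the hypothesis $Q\Lambda Q^\top=\Lambda$. The latter is equivalent to $Q\Lambda=\Lambda Q$ because $Q$ is orthogonal. Thus the eigenspace rotation leaves $L$ itself unchanged, which already proves invariance (3) at the level of the kernel. Note that $M$ is symmetric positive semidefinite since $c>0$.

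\textbf{Step 2 (sign similarity and scale).} For $A\in\mathcal{X}_k$, the principal submatrix satisfies $(cDLD)_A=c\,D_A L_A D_A$, where $D_A$ is the diagonal restriction of $D$. Multiplicativity of the determinant gives
\[
\det(M_A)=c^k\det(D_A)^2\det(L_A)=c^k\det(L_A),
\]
since $\det(D_A)=\pm1$. Summing over $\mathcal{X}_k$ gives $Z_k(M)=c^kZ_k(L)$, and dividing yields $P^k_M=P^k_L$. Specializing to $D=I_N,Q=I_N$ gives invariance (1), and specializing to $c=1,Q=I_N$ gives invariance (2).

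As a cross-check one can instead use \eqref{eq:kDPP-explicit}. Replacing $U$ by $DU$ multiplies each $\det(U_{A,B})$ by $\pm1$, which disappears upon squaring. Scaling all eigenvalues by $c$ multiplies both the numerator and $e_k$ by $c^k$. This route is only needed if one insists on working with $\theta$-coordinates.

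The main obstacle is conceptual rather than computational. One must note that $Q$ need not be a permutation or sign matrix but is block-orthogonal on the eigenspaces of $\Lambda$. Consequently $DUQ$ is merely another orthonormal eigenbasis of $DLD$, and the rotation must be handled via $Q\Lambda Q^\top=\Lambda$ and not via the squared-minor formula, whose individual terms do change under $Q$. Once the identity $M=cDLD$ is established, everything else is routine.
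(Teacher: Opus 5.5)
Your proposal is correct and follows essentially the same route as the paper: collapse $M$ to $cDLD$ via $Q\Lambda Q^\top=\Lambda$, then use $\det(M_A)=c^k\det(D_A)^2\det(L_A)=c^k\det(L_A)$ so that the factor $c^k$ cancels in the normalization. Your explicit observation that $Z_k(M)=c^kZ_k(L)>0$, so that $M\in\mathcal{L}_{N,k}$ and $P^k_M$ is well defined, is a small point the paper leaves implicit.
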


\begin{proof}
Expanding~\eqref{eq:M_transform} and using $Q \Lambda Q^\top = \Lambda$, the reconstructed kernel simplifies directly to $M = c D L D$. For any $A \in \mathcal{X}_k$, the principal minor satisfies $\det(M_A) = \det((c D L D)_A) = c^k \det(D_A)^2 \det(L_A) = c^k \det(L_A)$, since the submatrix $D_A$ is diagonal with entries $\pm 1$. Thus, the common factor $c^k$ cancels out in the numerator and denominator of the $k$-DPP definition, yielding $P_M^k(A) = P_L^k(A)$ for all $A \in \mathcal{X}_k$.
\end{proof}

Theorem~\ref{thm:invariances} does not necessarily exhaust the identifiability gap between the full DPP and the $k$-DPP.  The following theorem shows that the $k$-DPP exhibits (nontrivial) additional continuous unidentifiability whenever $\binom{N}{k}$ is strictly smaller than $N(N+1)/2$. For $L\in\mathcal L_{N,k}$ and $H\in\mathrm{Sym}(N)$, define the log-likelihood score in direction $H$ by
\begin{equation}
\mathcal{S}_H(A;L)
\;:=\;
\frac{d}{dt}\log P^k_{L+tH}(A)\Big|_{t=0},
\quad A\in \mathcal{X}_k.
\label{eq:score}
\end{equation}

\begin{theorem}[Dimension of non-identifiable directions]
\label{thm:dim}
Assume $L\in \mathrm{Sym}^{++}(N,\mathbb{R})\cap \mathcal{L}_{N,k}$, where $\mathrm{Sym}^{++}(N,\mathbb{R})$ is the set of real-valued, symmetric, positive definite $N\times N$ matrices.
The score vanishes in direction $H$ for every $A\in \mathcal{X}_{k}$ if and only if
\begin{equation}
\mathrm{tr}(L_A^{-1}H_A)\;=\;\text{const.}\quad\text{for all }A\in \mathcal{X}_k.
\label{eq:invisible}
\end{equation}
Further, the space of the non-identifiable directions has a dimension bound
\begin{equation}
\dim\bigl\{H\in\mathrm{Sym}(N):\mathcal{S}_H(\,\cdot\,;L)=0\bigr\}
\;\ge \;
\frac{N(N+1)}{2}-\binom{N}{k}+1.
\label{eq:dim-formula}
\end{equation}
In particular, if
\[\binom{N}{k}<\frac{N(N+1)}{2},\] 
the space of the non-identifiable directions has dimension 
strictly larger than the one-dimensional scale cone $\{cL\;:\;c>0\}$.
\end{theorem}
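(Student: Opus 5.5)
We begin by computing the score explicitly. Since $L\in\mathrm{Sym}^{++}(N,\mathbb{R})$, every principal submatrix $L_A$ is positive definite, so $\det(L_A)>0$ for all $A\in\mathcal{X}_k$. Jacobi's formula then gives
\[
\frac{d}{dt}\log\det\bigl((L+tH)_A\bigr)\Big|_{t=0}=\mathrm{tr}(L_A^{-1}H_A).
\]
Differentiating $\log Z_k(L+tH)$ yields
\[
\frac{d}{dt}\log Z_k\Big|_{t=0}=\sum_{B\in\mathcal{X}_k}P^k_L(B)\,\mathrm{tr}(L_B^{-1}H_B)=:\bar m(H),
\]
so that $\mathcal{S}_H(A;L)=\mathrm{tr}(L_A^{-1}H_A)-\bar m(H)$. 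If the score vanishes for every $A$, then $\mathrm{tr}(L_A^{-1}H_A)=\bar m(H)$ is constant in $A$. Conversely, if $\mathrm{tr}(L_A^{-1}H_A)=c$ for all $A$, then $\bar m(H)=c\sum_B P^k_L(B)=c$, and hence the score is zero. This proves the equivalence \eqref{eq:invisible}.

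For the dimension bound, we view \eqref{eq:invisible} as a system of linear constraints on $H\in\mathrm{Sym}(N)$, a space of dimension $N(N+1)/2$. Define the linear map
\[
\Phi:\mathrm{Sym}(N)\to\mathbb{R}^{\binom{N}{k}},\qquad \Phi(H)_A=\mathrm{tr}(L_A^{-1}H_A).
\]
The set of non-identifiable directions is $\Phi^{-1}(\mathbb{R}\mathbf{1})$, where $\mathbf{1}$ denotes the all-one vector in $\mathbb{R}^{\binom{N}{k}}$. Composing $\Phi$ with the quotient map $\mathbb{R}^{\binom{N}{k}}\to\mathbb{R}^{\binom{N}{k}}/\mathbb{R}\mathbf{1}$ gives a linear map into a space of dimension $\binom{N}{k}-1$. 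The non-identifiable set is exactly the kernel of this composed map, so by rank--nullity it has dimension at least $N(N+1)/2-\binom{N}{k}+1$. This is \eqref{eq:dim-formula}.

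A cleaner count would note that $H=L$ lies in $\Phi^{-1}(\mathbb{R}\mathbf{1})$ and satisfies $\Phi(L)=k\mathbf{1}\neq0$; we mention this only as a consistency check, since the rank--nullity count above already suffices. For the final claim, suppose $\binom{N}{k}<N(N+1)/2$. Then the bound gives a dimension of at least $2$. The scale direction $H=L$ contributes only one dimension, spanning the tangent line of the cone $\{cL\}$. Hence there exists a non-identifiable direction $H\notin\mathbb{R}L$.

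The main obstacle is interpretive rather than computational. Theorem~\ref{thm:dim} concerns first-order (infinitesimal) non-identifiability, namely the kernel of the score map. It should be stated as such, and not conflated with global non-identifiability of the model. The only technical care needed is ensuring that $\det(L_A)>0$ for all $A$, so that the logarithms and inverses are defined, and positive definiteness of $L$ guarantees this.
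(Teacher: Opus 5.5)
Your proof is correct and follows the paper's route: the Jacobi-formula score $\mathrm{tr}(L_A^{-1}H_A)-\sum_{B}P^k_L(B)\,\mathrm{tr}(L_B^{-1}H_B)$, the linear map $\Phi$, and rank--nullity applied to $\Phi^{-1}(\mathbb{R}\mathbf{1})$. Your quotient-map count is slightly tidier than the paper's. The paper writes $\dim V=\dim\ker\Phi+1$, which tacitly needs $\mathbf{1}\in\mathrm{im}\,\Phi$; this holds because $\Phi(L)=k\mathbf{1}$, but the paper does not say so. Your count needs this fact only for the final comparison with the scale direction.
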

\begin{proof}

We begin with the score formula
\begin{align}
  \mathcal{S}_H(A;L)  \;=\;
\mathrm{tr}(L_A^{-1}H_A)
\;-\;
\mathbb{E}_{S\sim P^k_L}\!\bigl[\mathrm{tr}(L_S^{-1}H_S)\bigr]
    \label{eq: score formula}
\end{align}
This follows by differentiating
$\log P^k_{L+tH}(A)=\log\det((L+tH)_A)-\log Z_k(L+tH)$:
\[
\frac{d}{dt}\log\det((L+tH)_A)\Big|_{t=0}
=\mathrm{tr}(L_A^{-1}H_A),
\qquad
\frac{d}{dt}\log Z_k(L+tH)\Big|_{t=0}
=\mathbb{E}_{S\sim P^k_L}\!\bigl[\mathrm{tr}(L_S^{-1}H_S)\bigr].
\]
This score formula \eqref{eq: score formula} proves the first part.

Next, we will show \eqref{eq:dim-formula}.
Let $m:=N(N+1)/2$ and
$\Phi(H):=( \mathrm{tr}(L^{-1}_{A}H_{A}) )_{A\in \mathcal{X}_{k}}$, where $\Phi$ forms a linear map from $\mathrm{Sym}(N)$ to $\mathbb{R}^{\mathcal{X}_{k}}$.
From  \eqref{eq: score formula}, we have
\[V:=\{H\in\mathrm{Sym}(N):\mathcal{S}_H(\,\cdot\,;L)=0\bigr\}=\Phi^{-1}(\mathbb{R}\mathbf{1}_{\mathcal{X}_{k}})\]
with $|\mathcal{X}_{k}|$-dimensional all-one vector $\mathbf{1}_{\mathcal{X}_{k}}$.
The rank-nullity formula gives
\[
\mathrm{dim}(\mathrm{Ker}(\Phi))=
\underbrace{\mathrm{dim}(\Phi^{-1}(\mathbb{R}^{\mathcal{X}_{k}}))}_{=m}-\mathrm{rank}(\Phi)
\]
For $\tilde{\Phi}:=\Phi |_{\Phi^{-1}(\mathbb{R}\mathbf{1}_{\mathcal{X}_{k}})}$,
the rank-nullity formula gives
\[
\mathrm{dim}(V)=\mathrm{dim}(\Phi^{-1}(\mathbb{R}\mathbf{1}_{\mathcal{X}_{k}}))
=\mathrm{dim}(\mathrm{Ker}(\tilde{\Phi}))+\underbrace{\mathrm{rank}(\tilde{\Phi})}_{=1}.
\]
Since $\mathrm{Ker}(\Phi)=\mathrm{Ker}(\tilde{\Phi})$, it follows that 
\[
\mathrm{dim}(V)=m-\mathrm{rank}(\Phi)+1.
\]
Together with the upper bound $\mathrm{rank}(\Phi)\le \mathrm{dim}(\mathbb{R}^{\mathcal{X}_{k}})=\binom{N}{k}$, this gives
\[
\mathrm{dim}(V)\ge m-\binom{N}{k}+1,
\]
which completes the proof.
\end{proof}

The strict inequality
\[
\binom{N}{k}>\frac{N(N+1)}{2}
\]
already occurs for \(N=7\) and \(k\in\{3,4\}\), since
\[
\binom{7}{3}=\binom{7}{4}=35>28=\frac{7\cdot 8}{2}.
\]
In contrast, the nearest threshold example is \(N=6\), \(k=3\), where
\[
\binom{6}{3}=20<21=\frac{6\cdot 7}{2}.
\]
In this case, the lower bound gives at least a two-dimensional
space of non-identifiable directions.

\begin{example}[Nontrivial continuous non-identifiable directions]
Fix $L=I_N$. For $\rho \in \left(-\frac{1}{N-1}, 1\right)$, let $H_\rho = -\rho I_N + \rho 1_N 1_N^\top$.
Since $L_A = I_k$, the local score strictly vanishes: $\mathrm{tr}(L_A^{-1}H_{\rho, A}) = \mathrm{tr}(-\rho I_k + \rho 1_k 1_k^\top) = 0$ for all $A \in \mathcal{X}_k$, satisfying the condition in Theorem 2.
Globally, the kernel $L_\rho = L + H_\rho$ yields principal minors $\det((L_\rho)_A) = (1-\rho)^{k-1}(1+(k-1)\rho)$.
This is independent of $A$, meaning every admissible $L_\rho$ induces the identical uniform $k$-DPP.
Because distinct values of $\rho$ yield distinct eigenvalue ratios, no two $L_\rho$ can be related by the trivial transformations of Theorem 1, explicitly confirming a continuous non-identifiable fiber.
\end{example}

% =====================================================================
\section{Conclusion}
% =====================================================================

We have proposed the geometric analysis using the spectral decomposition of
DPPs, centered on the contrast between the full DPP and the
$k$-DPP.  The spectral decomposition $L=U\Lambda U^{\top}$ separates
two statistically distinct components: the eigenvalues $\Lambda$
govern the cardinality profile, while the eigenspace orientation $U$
governs the conditional distribution within each stratum.

After conditioning on $|A|=k$, the induced $k$-DPP retains only relative spectral information, i.e., the common scale becomes unidentifiable and the eigenspace rotation is identifiable only through squared minors. Three explicit non-identifiability (scale, sign similarity, eigenspace rotation) extend beyond the single sign-similarity non-identifiability that characterizes the full DPP. A dimension-counting theorem shows that an additional non-trivial continuous non-identifiability exists in sharp contrast to the discrete finite non-identifiability of the full DPP. 

At the Fisher metric level, the common-shift direction
$\mathbb{R}\mathbf{1}_{N}$ is exactly the null direction, and the metric at the symmetric point equals the canonical projector onto the additive quotient space. This provides a mathematically transparent foundation for further analyses of constrained parametrization, curvature, and statistical inference in DPPs.

The problem of learning the DPP kernel matrix from data is of central importance in the machine learning community~\cite{gillenwater2014,Kawashima2023-kv}. As a direct extension of this work, developing kernel learning algorithms that explicitly leverage the spectral decomposition constitutes a promising direction for future research

\section*{Acknowledgment}
Part of this work is supported by JSPS KAKENHI No.~JP23K24909, JP24H00247,
JP25H01494, JP26K02871 and by MEXT STAR-E NEXT Project (No.~JPJ013735).

%\bibliography{spec_DPP}

\newpage 

% =====================================================================
\appendix
% =====================================================================

\section{Fisher information expression}
\label{app:fisher-k1-kN1}

For completeness, we record the Fisher information of the diagonal
$k$-DPP for the two extremal cases $k=1$ and $k=N-1$ at $N=3$.

\textbf{Case $k=1$, $N=3$.}
The sample space is $X_1=\{\{1\},\{2\},\{3\}\}$ and
$P^{(1)}_\theta(\{i\})=e^{\theta_i}/(e^{\theta_1}+e^{\theta_2}+e^{\theta_3})$,
which is the standard categorical (softmax) model.  Writing
$p_i:=P^{(1)}_\theta(\{i\})$, the Fisher information matrix is the
well-known expression
\[
G^{(1)}(\theta)=\mathrm{diag}(p)-pp^{\top},
\]
with $G^{(1)}(\theta)\mathbf{1}_{N=3}=0$ and $\mathrm{rank}\,G^{(1)}=2$.
At the symmetric point $\theta=0$, $p_i=1/3$ and
\[
G^{(1)}(0)=\frac{1}{3}\begin{pmatrix}2&-1&-1\\-1&2&-1\\-1&-1&2\end{pmatrix}
=I_3-\tfrac{1}{3}\mathbf{1}_{N=3}\mathbf{1}_{N=3}^{\top},
\]
identical in form to \eqref{eq:G0}, confirming the symmetry $k\leftrightarrow N-k$.

\textbf{Case $k=N-1=2$, $N=3$.}
This is the case treated in Section~\ref{subsec:fisher-N3k2}, and the
Fisher matrix is given by \eqref{eq:fisher-matrix}--\eqref{eq:G0}.

\textbf{Symmetry $k\leftrightarrow N-k$.}
For the diagonal $k$-DPP with $N=3$, the maps $k=1$ and $k=2$ are
related by the complementation $A\mapsto Y\setminus A$, and indeed
the two Fisher matrices coincide at the symmetric point $\theta=0$.
This reflects the general symmetry
$P^{k}_{L(\theta)}(A)=P^{N-k}_{L(-\theta)}(Y\setminus A)$,
which implies $G^{(k)}(\theta)=G^{(N-k)}(-\theta)$ for the diagonal
$k$-DPP.

\section{Inclusion probabilities via exterior power}
\label{app:kthInclusion}

The following is a complementary observation that relates the
spectral decomposition to the structure of inclusion
probabilities of the \emph{unconditional} DPP.

To connect with the marginal kernel, recall $K=L(L+I)^{-1}$.  The
matrices $L$ and $K$ share eigenvectors with eigenvalues
$\mu_i=\lambda_i/(1+\lambda_i)$, so $K=UMU^{\top}$,
$M=\mathrm{diag}(\mu_1,\ldots,\mu_N)$.  The passage from $L$ to $K$
preserves the orbital component and acts only on the spectral
component.

Distinct from the $k$-DPP is the family of $k$-th order
\emph{inclusion probabilities} of the unconditional DPP
$A\sim\mathrm{DPP}(L)$:
\begin{equation}
\rho_k(S):=P(S\subseteq A)=\det(K_S),
\qquad S\subseteq Y,\ |S|=k.
\label{eq:inclusion}
\end{equation}
Here $k=|S|$ is simply the order of the inclusion event, not a
conditioning value; $\rho_k(S)$ depends on the full spectrum of $L$.

\begin{proposition}[$k$-th inclusion probabilities via exterior power]
\label{prop:exterior}
Fix $k\in\{1,\ldots,N\}$ and let $K=UMU^{\top}$.  Then
$\bigwedge^k K=(\bigwedge^k U)(\bigwedge^k M)(\bigwedge^k U)^{\top}$,
and for each $S=\{s_1,\ldots,s_k\}$ with
$v_S=e_{s_1}\wedge\cdots\wedge e_{s_k}$,
\begin{equation}
\rho_k(S)=\langle v_S,\,(\textstyle\bigwedge^k K)\,v_S\rangle.
\label{eq:lifted}
\end{equation}
The spectral--orbital decomposition persists: spectral weights
$\mu_{i_1}\cdots\mu_{i_k}$ appear as diagonal entries of
$\bigwedge^k M$, while the orbital component acts by the orthogonal
rotation $\bigwedge^k U$ on $\bigwedge^k\mathbb{R}^N$.
\end{proposition}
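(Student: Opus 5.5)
We prove Proposition~\ref{prop:exterior} by reducing it to two standard facts about the exterior power functor: it is multiplicative and it commutes with transposition. The inclusion formula \eqref{eq:inclusion} is taken as known, since it is the standard characterization of the marginal kernel $K=L(L+I)^{-1}$. The only new content is the identification of $\det(K_S)$ with a diagonal matrix element of $\bigwedge^k K$.

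\textbf{Step 1: factorization.} Consider the linear map $\bigwedge^k A$ on $\bigwedge^k\mathbb{R}^N$ defined by $v_1\wedge\cdots\wedge v_k\mapsto Av_1\wedge\cdots\wedge Av_k$. On decomposable vectors one checks directly that $\bigwedge^k(AB)=(\bigwedge^k A)(\bigwedge^k B)$. Take the orthonormal basis $\{v_T\}_{|T|=k}$ of $\bigwedge^k\mathbb{R}^N$, with inner product $\langle v_S,v_T\rangle=\det(\langle e_{s_i},e_{t_j}\rangle)=\mathbf{1}_{S=T}$. In this basis the matrix of $\bigwedge^k A$ has entries
\[
\langle v_S,(\textstyle\bigwedge^k A)v_T\rangle=\det(A_{S,T}).
\]
This is the Leibniz expansion of $\langle e_{s_1}\wedge\cdots\wedge e_{s_k},Ae_{t_1}\wedge\cdots\wedge Ae_{t_k}\rangle$. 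It follows that $\bigwedge^k(A^\top)=(\bigwedge^k A)^\top$. Applying functoriality to $K=UMU^\top$ then gives the claimed factorization $\bigwedge^k K=(\bigwedge^k U)(\bigwedge^k M)(\bigwedge^k U)^\top$.

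\textbf{Step 2: the inclusion formula \eqref{eq:lifted}.} Set $S=T$ in the matrix-entry identity of Step~1. This gives $\langle v_S,(\bigwedge^k K)v_S\rangle=\det(K_{S,S})=\det(K_S)=\rho_k(S)$.

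\textbf{Step 3: the spectral and orbital interpretation.} Two observations are needed.
\begin{itemize}
\item Because $M$ is diagonal, $\bigwedge^k M$ is diagonal in the basis $\{v_T\}$. Its entries are $\det(M_T)=\prod_{i\in T}\mu_i$.
\item Because $U^\top U=I$, functoriality gives $(\bigwedge^k U)^\top(\bigwedge^k U)=\bigwedge^k I=I$. Hence $\bigwedge^k U$ is orthogonal.
\end{itemize}
Expanding, we obtain
\[
\rho_k(S)=\sum_{|T|=k}\det(U_{S,T})^2\prod_{i\in T}\mu_i,
\]
which is the Cauchy--Binet analogue of \eqref{eq:kDPP-explicit}.

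The only step that is not routine is the matrix-entry identity $\langle v_S,(\bigwedge^k A)v_T\rangle=\det(A_{S,T})$. I would verify it by writing $Ae_{t_j}=\sum_i A_{i t_j}e_i$, expanding the wedge product multilinearly, and using the Gram-determinant definition of the inner product. Everything else follows formally from this identity.
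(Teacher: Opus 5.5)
Your proposal is correct and follows essentially the same route as the paper: the factorization comes from functoriality of $\bigwedge^k$, $\bigwedge^k M$ is diagonal with entries $\prod_{i\in T}\mu_i$, and $\det(K_S)$ is identified with the $v_S$-diagonal entry of $\bigwedge^k K$. You also supply details the paper leaves implicit: the matrix-entry identity $\langle v_S,(\bigwedge^k A)v_T\rangle=\det(A_{S,T})$, the compatibility $\bigwedge^k(A^\top)=(\bigwedge^k A)^\top$, the orthogonality of $\bigwedge^k U$, and the resulting Cauchy--Binet expansion of $\rho_k(S)$.
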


\begin{proof}
Functoriality of the exterior power gives the factorization.  Since $M$ is diagonal, $\bigwedge^k M$ is diagonal with entries $\mu_{i_1}\cdots\mu_{i_k}$. The principal minor $\det(K_S)$ equals the diagonal entry of $\bigwedge^k K$ at $v_S$, which gives
\eqref{eq:lifted}.
\end{proof}

\begin{remark}
Proposition~\ref{prop:exterior} concerns the unconditional DPP, not the conditional $k$-DPP. The quantities $\rho_k(S)=P(S\subseteq A)$ are inclusion probabilities summed over all cardinalities compatible with $S\subseteq A$; they are not probabilities on the stratum $X_k$. 
While $\rho_k$ depends on the full spectrum of $L$, the $k$-DPP probability $P_L^k$ of Proposition~1 is invariant under a common scaling of the eigenvalues. Equivalently, in the log-eigenvalue coordinates $\theta_i=\log\lambda_i$, it depends only on the common-shift equivalence class $[\theta]\in \mathbb R^N/\mathbb R\mathbf 1_N.$
Thus Proposition~2 and Proposition~1 describe different objects: the former concerns fixed-order inclusion probabilities of the full DPP, whereas the latter concerns the conditional law on a fixed cardinality stratum.
\end{remark}

For a projection $2$-DPP on $N=4$, the marginal kernel $K=VV^{\top}$ is associated with the $2$-vector $\omega=v_1\wedge v_2\in\bigwedge^2\mathbb{R}^4$ with Pl\"ucker coordinates $p_{ij}=\det(V_{ij})$.  Decomposability ($\omega\wedge\omega=0$) yields the classical Pl\"ucker relation $p_{12}p_{34}-p_{13}p_{24}+p_{14}p_{23}=0$, which translates back to
\[
\sqrt{\rho_2(\{1,2\})\rho_2(\{3,4\})}
\pm\sqrt{\rho_2(\{1,3\})\rho_2(\{2,4\})}
\pm\sqrt{\rho_2(\{1,4\})\rho_2(\{2,3\})}=0.
\]
This exposes the nonlinear algebraic constraint on inclusion probabilities that the ambient linearization of Proposition~\ref{prop:exterior} makes visible. 
\end{document}